 \newtheorem{theorem}{Theorem}
 \title{Long term planning of military aircraft flight and maintenance operations}
 \author{Franco Peschiera, Olga Battaïa, Alain Haït, Nicolas Dupin}
\date{ISAE-SUPAERO, Université de Toulouse, France}
\begin{document}
	
	\maketitle
	
	\begin{abstract}
	We present the Flight and Maintenance Planning (FMP) problem in its military variant and applied to long term planning. The problem has been previously studied for short- and medium-term horizons only. We compare its similarities and differences with previous work and prove its complexity. We generate scenarios inspired by the French Air Force fleet. We formulate an exact Mixed Integer Programming (MIP) model to solve the problem in these scenarios and we analyse the performance of the solving method under these circumstances. A heuristic was built to generate fast feasible solutions, that in some cases were shown to help warm-start the model.

\end{abstract}
	


%
%



\section{Introduction}
\label{sec:intro}

Maintenance and repair industries are not the easiest to measure as part of a country’s GDP. Canada estimates its companies spent 3.3\% of GDP on repairs in 2016, more than twice as much as the country spends on research and development \cite{Economist2018}. Although the value of a good maintenance policy is usually hidden behind the uneventful and correct functioning of a system, the cost of a lack of maintenance of public infrastructures can be measured from time to time. This became all too evident recently with the collapse of the Genoa bridge in Italy. The lack of investment in maintenance was one of the main reasons for this tragedy that caused the death of 43 people in August 2018.

In the aircraft industry, maintenance is done via various types of maintenance operations, or checks. These checks vary in frequency, duration and thoroughness. Type A and B checks are scheduled on a daily or weekly basis and take up to 300 man-hours. C and D checks may take several months to complete and are scheduled every 1 and 5 years respectively. Checks A to C are usually performed by the aircraft operator. However, D checks need to be performed in a specialized facility, using specific resources.

The Flight and Maintenance Planning (FMP) problem, first presented by \cite{Feo1989}, studies how these maintenance operations are scheduled and how flight activities are assigned to a fleet of aircraft along a planning horizon. It has two main variants: civil (or commercial) and military. 

In the civil variant of the FMP (\cite{Hane1995, Clarke1996, Sriram2003}), by far the one that has received most attention, aircraft need to be routed along different destinations by assigning them legs in order to build daily trips. Checks are done during the night and are usually limited to A and B type checks. The planning horizon is over a period of several days (e.g. \cite{Sriram2003} uses 7 days). The objective is to maximize profit.

The military variant assumes that each aircraft returns to the airbase after each flight. Other differences exist in terms of objectives, check frequencies, check durations and the size of the fleets concerned. This paper is given over to the military variant.

Recently, optimizing maintenance operations on military aircraft has become a priority for many governments \cite{marlow2017optimal}. In particular, the French Air Force was interested in mathematical solutions to schedule maintenance for the Mirage 2000 fleet \cite{DefiOptiplan2018} after it was revealed that increases in maintenance costs had not been followed by improved availability of aircraft \cite{Parly2017} and preliminary work on the development of mathematical models in \cite{Chastellux2016}.

To the best of our knowledge, optimization models for the military FMP problem, first introduced by \cite{sgaslik1994planning} and \cite{Pippin1998} with respect to helicopter maintenance, have always prioritized availability of the fleet under a given set of operational and demand constraints. There are several ways to understand availability. An aircraft is considered available if it is not undergoing any maintenance operation and has enough flight hours to be assigned to a mission. Another way of considering aircraft availability is the amount of flight hours remaining before mandatory maintenance. This implies that a fleet with a small number of flight hours is not as prepared as one in which every aircraft has just completed its maintenance.

As can be seen in section \ref{sec:state}, previous work has been particularly focused on the short and medium-term planning variants of this problem. This paper studies the long-term planning of military maintenance and flight operations, i.e. D-type checks, for which special considerations need to be taken into account.

This article is structured as follows. Section \ref{sec:problem} sets out a detailed description of the problem. An analysis of the previous work on FMP is given in section \ref{sec:state}. 
Section \ref{sec:complex} analyses the complexity of the problem considered. 
Section \ref{sec:model} presents a new MIP formulation for the problem. 
Section \ref{sec:experim} introduces the numerical experiment. 
Section \ref{sec:results} discusses the results obtained. 
Lastly, section \ref{sec:conclusions} provides conclusions and pointers for further work.

\section{Problem statement}
\label{sec:problem}

The goal is to assign a number of military aircraft to a given set of already scheduled missions while scheduling maintenance operations (referred to as checks) over a time horizon.

A series of $j \in \mathcal{J}$ missions are known along a horizon divided into $t \in \mathcal{T}$ periods. Since all missions are already scheduled, we know the time periods $T_{j} \subset \mathcal{T}$ in which they will be performed. Similarly, all the missions to be performed in period $t$ are known and defined by set $J_{t} \subset \mathcal{J}$. Each mission requires a certain number $R_{j}$ of aircraft $i \in \mathcal{I}$ which it uses for a time duration set by $H_{j}$ in each assigned period. Set $I_{j} \subset \mathcal{I}$ lists the aircraft that can be assigned to each mission and set $O_{i} \subset \mathcal{J}$ consists of missions for which aircraft $i$ can be used. Whenever an aircraft is assigned to a mission $j$, it needs to be assigned for at least $MT_{j}$ consecutive periods.

Each aircraft can only be assigned to a single mission in any given period. These aircraft suffer from wear and tear and require regular checks. The need for maintenance is calculated based on two indicators.

The first one is called ``remaining calendar time'' ($rct$). It expresses the amount of time (measured in time periods) after which the aircraft cannot be used any more and has to undergo a check. This value is calculated for each aircraft $i$ and each time period $t$. Similarly, ``remaining flight time'' ($rft$) is employed to measure the amount of time (measured in flight hours) that the aircraft $i$ can be used before needing a check at any given period $t$.

Each check has a fixed duration of $M$ periods and cannot be interrupted: during this time the aircraft cannot be assigned to any mission. After a check, an aircraft restores its remaining calendar time and remaining flight time to their maximum values $E^{M}$ and $H^{M}$ respectively. After undergoing a check, there is a minimum amount of periods $E^{m}$ where an aircraft cannot undergo another check.

Some aircraft can be in maintenance at the beginning of the planning horizon, $N_{t}$ is the number of aircraft in such scheduled maintenance per period $t$ and defined only for the first $m - 1$ time periods. Other aircraft are assigned to missions at the beginning of the planning horizon, $\mathcal{A}_{j}^{\text{init}}$ is used to identify aircraft assigned to such missions in the first period.

As in previous work done by \cite{Verhoeff2015}, we define the serviceability of an aircraft as whether it is able or not at a certain moment of time to perform a mission (i.e. is not undergoing maintenance); and we define the sustainability of an aircraft as whether it is able to continue doing missions in the future (i.e. has enough remaining flight time). Finally, we define availability as the total number of periods for which an aircraft is serviceable. All serviceable aircraft have a minimum consumption per period equal to $U^{\min}$ i.e. if they are not assigned to any mission and not in maintenance.

To guarantee both serviceability and sustainability at each time period, aircraft are grouped into clusters. Each cluster represents a subset of aircraft that can do the same types of missions. For each cluster, a minimal number of serviceable aircraft and a minimal number of total remaining flight hours (sustainability) is set as a constraint for each period.

Finally, the objective is to maximize the total availability of the fleet and maximize the final state (i.e. the overall sustainability at the last period). In order to do this, the total number of checks should be minimized and the remaining flight time of the fleet at the last period should be maximized. In the next section, we position our problem in light of the existing contributions in the literature.

\section{State of the Art}
\label{sec:state}

Here we will use aircraft in the most general sense to refer both to airplanes and helicopters. As mentioned before, the planning horizon for maintenance operations may be short-, medium- and long-term. The short term has a time horizon of at most 1 year and is usually divided into periods of one day \cite{marlow2017optimal,Cho2011,Vojvodic2010}. Medium term planning is concerned with a weekly or monthly schedule over 6 months to 2 years \cite{Seif2018,Verhoeff2015,Kozanidis2008, Hahn2008,Pippin1998}. Here, maintenance operations are assigned every 200 to 400 flight hours, which correspond to type A, B and C checks. The capacity for these maintenance types is seen as the number of available man-hours at each period of time. An efficient solving method for a particular case (maximizing overall sustainability) of this problem was presented by \cite{gavranis2015exact} and generalized in \cite{Seif2018}. In \cite{Gavranis2017}, this same technique expanded to deal with the multi-objective version of the problem where overall sustainability is maximized at the same time as its variability is minimized.

Long term planning covers time horizons between five and ten years and mostly addresses scheduling of D checks. These visits are particular in that they last several months. They are scheduled every 1000 - 1200 flight hours or at most 5 years after the last overhaul maintenance. The capacity for this type of maintenance is usually defined by the number of aircraft that can be in maintenance at any given time. 
The first mathematical model for the military Flight and Maintenance Planning problem with respect to long term planning was first presented by \cite{sgaslik1994planning}.
\cite{sgaslik1994planning} used a two-model setup in which a long term (12 monthly periods) problem was solved in order to determine D-type checks and flight hour assignments. Later, this solution was fed into a short-term model where planned missions were assigned to each aircraft in a heterogeneous fleet.



It should be noted that all existing problem formulations are based on the hypothesis that flight hours are continuously assigned to aircraft in order to accomplish a target sum of flight hours (per period or global). Therefore, missions having different flight hours assignments are not explicitly considered.

Another common hypothesis considers an homogeneous fleet, i.e. each aircraft being capable of performing any of the existing missions. The most recent contribution of (\cite{Seif2018}) is an exception with an heterogeneous fleet.

The maintenance capacity is usually defined as the number of working hours available per period. Each maintenance operation is defined by the number of hours required to perform it. 

One can easily imagine that military operations are subject to uncertainty regarding missions, destinations and flight hours. However, contributions incorporating uncertain parameters are quite rare. One of the first attempts to take into account the stochastic nature of maintenance requirements and durations was presented by \cite{Mattila2008}, where a simulation model was built in order to find good maintenance policies. Recently, \cite{Kessler2013} developed a model based on a multi-armed bandit superprocess to choose between two different heuristics or policies in order to maximize the availability of the fleet.

We suggest the following classification of existing FMP formulations. It includes different types of objective functions and constraints.

\subsection{Maintenance related features (\textbf{M})}

  \begin{enumerate}
    \item \textbf{Flight potential:} maximal number of flight hours before a mandatory maintenance.
    \item \textbf{Calendar potential:} maximal number of calendar periods before a mandatory maintenance.
    \item \textbf{Duration:} set number of periods during which an aircraft is immobilized.
    \item \textbf{Capacity:} maximum number of aircraft undergoing maintenance in any given period.
    \item \textbf{Heterogeneous:} maintenance operations may differ for particular types of aircraft, each having its own capacity. An aircraft may have more than one type of maintenance.
    \item \textbf{Flexible:} the overall maintenance capacity is measured in man-hours, a different number of resources can be dedicated to each aircraft at different time periods.
  \end{enumerate}

\subsection{Mission related features (\textbf{V})}

  \begin{enumerate}
    \item \textbf{Min aircraft:} each mission needs a certain number of aircraft of each type in each period.
    \item \textbf{Heterogeneous:} aircraft may not be compatible with all missions.
    \item \textbf{Hour consumption:} a fixed amount of flight hours is required in each period.
    \item \textbf{Min duration:} if an aircraft is assigned to a mission, there's a minimum amount of time it has to be used for this mission and cannot be assigned to another one.
    \item \textbf{Total hours:} the total number of flight hours in the horizon needs to fall within a given range. Sometimes, each group of aircraft has its own range.
    \item \textbf{Min usage:} aircraft that are not assigned to a mission or a maintenance operation are used for a default number of hours per period.
  \end{enumerate}

\subsubsection{Aircraft related features (\textbf{F})}

  \begin{enumerate}
    \item \textbf{Initial state}: takes into account the current status of the aircraft at the beginning of the planning horizon.
    \item \textbf{Sustainability cluster}: minimum amount of flight hours per cluster and period.
    \item \textbf{Serviceability cluster}: minimum amount of serviceable aircraft per cluster and period.
    \item \textbf{Availability:} maximum total amount of maintenance operations.
    \item \textbf{Sustainability:} limit the amount of remaining flight hours:
    \begin{enumerate}
      \item at the last period of the planning horizon, lower bound.
      \item at some periods, lower bound.
      \item at every period during the planning horizon, lower bound.
      \item at the period with the lowest number of hours, maximize the minimum.
      \item at all periods, maximize the sum.
      \item at all periods, minimize the variance.
    \end{enumerate}
  \end{enumerate}

Table \ref{tab:instances} shows the problem formulations used in the previous work and available information about solved instances of the problem. 'Flight hours' are the maximum number of flight hours between 2 maintenance operations. They provide an idea of the type of maintenance taken into account.

\begin{small}
\begin{longtable}{lllllll}
  \toprule
  Parameter &Gavranis          &Cho            &Verhoeff          &Marlow            &Seif     &This \tabularnewline
            &\cite{Gavranis2017} &\cite{Cho2011} &\cite{Verhoeff2015} &\cite{marlow2017optimal} &\cite{Seif2018} & paper\tabularnewline
  \midrule
  \endhead
  Aircraft       &	 50-100  &	 15    &20\cite{J17} &	12-24 &	100 &	 15-60   \tabularnewline
  Periods        &	 6       &	 520   &	 52        &	30    &	 6    &	 90    \tabularnewline
  Period unit    &	 month   & 0.5 day &	 week      &	day   & month &	month  \tabularnewline
  Flying hours   &	 300     &	 300   &   400       &	200   &125-500&800-1200\tabularnewline
  Constraints    &	         &	 30k   &	           &	36k   &	      &	4k-17k \tabularnewline
  Variables      &	         &	 350k  &	           &	20k   &	      &	6k-24k \tabularnewline
  Instances      &	 30      &	 60    &	   2       &	16    &	 30   &	50     \tabularnewline
  Technique      &MIP        &   MIP   &     MIP     &   MIP  &MIP    &MIP     \tabularnewline
\bottomrule
  \caption{Previous work: solved instances }
  \label{tab:instances}\\
\end{longtable}
\end{small}

Table \ref{tab:constraints} shows the constraints taken into account in existing problem formulations.

\begin{small}
\begin{longtable}{lllllll}
\toprule
  Constraints &Gavranis          &Cho            &Verhoeff          &Marlow            &Seif     &This \tabularnewline
            &\cite{Gavranis2017} &\cite{Cho2011} &\cite{Verhoeff2015} &\cite{marlow2017optimal} &\cite{Seif2018} & paper\tabularnewline
\midrule
\endhead
\textbf{M1} Flight potential      & C       & C  & C  & O  & C  & C  \tabularnewline
\textbf{M2} Calendar potential    &         &    &    &    &    & C  \tabularnewline
\textbf{M3} Duration              &         &    &    &    &    & C  \tabularnewline
\textbf{M4} Capacity              & C       &  O & C  & C  &  C & C  \tabularnewline
\textbf{M5} Heterogeneous         &         &    &    &    & C  &    \tabularnewline
\textbf{M6} Flexible              & C       &    & C  & C  & C  &    \tabularnewline
\textbf{V1} Min aircraft          &         &    &    &    &    & C  \tabularnewline
\textbf{V2} Heterogeneous         &         &    &    &    & C  & C  \tabularnewline
\textbf{V3} Hour consumption      & C       & C  & C  & C  & C  & C  \tabularnewline
\textbf{V4} Min duration          &         &    &    &    &    & C  \tabularnewline
\textbf{V5} Total hours           & C       &    & C  & O  &    &    \tabularnewline
\textbf{V6} Min usage             &         &    &    &    &    & C  \tabularnewline
\textbf{F1} Initial states        & C       & C  & C  & C  & C  & C  \tabularnewline
\textbf{F2} Sustainability cluster&         &    &    &    &    & C  \tabularnewline
\textbf{F3} Serviceability cluster&         &    &    &    &    & C  \tabularnewline
\textbf{F4} Availability          &         &    & C  &    &    & O  \tabularnewline
\textbf{F5} Sustainability        &O(e),O(f)&C(a)&O(d)&O(b)&O(e)&O(a)\tabularnewline

\bottomrule
  \caption{Constraints taken into account in the existing formulations: "O" means an objective or a soft constraint, "C" means a hard constraint.}
  \label{tab:constraints}\\
\end{longtable}
\end{small}

As can be seen from this comparison, the majority of existing formulations were developed for an homogeneous fleet that needs to comply with general flight-hour demands under flight hours constraints to control maintenance operations and maintenance capacity constraints (flexible or not).

The formulation developed in this paper includes a series of new constraints that have not been studied before. We introduce a calendar constraint on the frequency of maintenance operations: it now depends not only on the flight hours of an aircraft but also on the calendar time since the last maintenance. The assignment to existing missions is considered explicitly: each mission requires a number of aircraft of a specific type. Each aircraft assignment has a minimal number of periods to serve the same mission. The objective is to maximize the overall availability of the fleet while guaranteeing acceptable levels of sustainability and serviceability per type of aircraft.

\section{Complexity analysis}
\label{sec:complex}

In order to prove the NP-Hardness of the problem considered, we use its reduction to solve the fixed interval scheduling problem introduced below.

\subsection{Fixed Interval Scheduling Problem}

    The ``Fixed Interval Scheduling Problem'' is an NP-Complete problem that was presented in \cite{Krishnamoorthy2001}. It is also referred to as ``shift minimization personnel task scheduling problem'' when related to workers.

    A description paraphrased from \cite{Smet2015} follows:

    Let $P = 1,...,n$ be the set of tasks to be assigned and $E = 1,...,m$ the set of employees. Each task $p \in P$ has a duration $u_{p}$ , a start time $s_{p}$ and an end time $f_{p}$ = $s_{p} + u_{p}$ . Each employee $e$ has a set of tasks $P_{e} \in P$ that he/she can perform. Similarly, for each task $p$, a set $E_{p} \in E$ exists, which contains all employees that can perform task $p$. Both $T_{e}$ and $E_{p}$ are defined based on qualifications, time windows of tasks and the availability of employees. The objective is to minimize the required number of employees needed to perform all tasks.

    This problem uses the following notations.

    $i \in \mathcal{P}_{ref}$:

    \begin{tabular}{ll}
        $\mathcal{E}$       & employees. \\
        $\mathcal{P}$       & tasks.  \\
        $u_p$     			& duration of task $p$. \\
        $s_p$     			& start time of task $p$. \\
        $f_p$     			& end time of task $p$. \\
        $S_e$               & set of tasks  employee $e$ can perform. \\
        $E_p$               & set of employees that can perform task $p$. \\
    \end{tabular}





\subsection{Reduction}

    Now, a simplified description of our original problem is drafted so that it complies with the formulation of the ``Fixed Interval Scheduling Problem''.

	\begin{theorem}
    Finding a feasible solution to the Flight and Maintenance Planning Problem is equivalent to solving the Fixed Interval Scheduling Problem.
	\end{theorem}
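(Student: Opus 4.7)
The plan is to establish a two-way polynomial-time reduction between the (decision version of the) Fixed Interval Scheduling Problem and the feasibility question for the simplified Flight and Maintenance Planning Problem described just above. First I would fix the parameter dictionary: aircraft $i \in \mathcal{I}$ correspond to employees $e \in \mathcal{E}$, missions $j \in \mathcal{J}$ to tasks $p \in \mathcal{P}$, the set of periods $T_{j}$ of mission $j$ to the interval $\{s_{p}, \ldots, f_{p} - 1\}$ of task $p$, the compatibility set $I_{j}$ to $E_{p}$, and the minimum consecutive assignment $MT_{j}$ to the duration $u_{p}$. Under this renaming, a mission requires exactly one aircraft ($R_{j} = 1$) and an employee can execute at most one task at any time, so the combinatorial cores of the two problems coincide.

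For the direction FISP $\leq_{P}$ FMP feasibility, given any FISP instance I would construct the FMP instance above and, in addition, fix the maintenance parameters so that no aircraft is ever forced to undergo maintenance within the horizon: choose $E^{M}$ and $H^{M}$ larger than the span of the horizon, set every $H_{j}$ and $U^{\min}$ to zero, leave $N_{t}$ and $\mathcal{A}_{j}^{\text{init}}$ empty, and use trivial (singleton) clusters so that the serviceability and sustainability cluster constraints collapse. Any assignment of employees to tasks that covers all of $\mathcal{P}$ in FISP then yields a valid schedule of aircraft to missions in FMP, and conversely, since the ``single mission per period per aircraft'' rule exactly mirrors the non-overlap condition on employees. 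Hence a FISP instance is a yes-instance iff the constructed FMP instance admits a feasible solution.

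For the reverse direction, starting from a simplified FMP instance matching the above template, I would read off the corresponding FISP instance by inverting the dictionary, and argue that feasibility of the schedule is preserved period by period. The main obstacle is not the construction itself but the justification that the simplification is legitimate, i.e.\ that the extra features of the full FMP (calendar potential, flight potential, check duration $M$, cluster minima, default consumption $U^{\min}$, initial conditions) can all be neutralised in a polynomial-size instance without altering feasibility; this amounts to checking that under the parameter values chosen above none of those constraints ever becomes binding. A secondary subtlety is the minimum-duration constraint $MT_{j}$: because a FISP task is inherently a block of $u_{p}$ consecutive periods, the requirement $MT_{j} = u_{p}$ is automatically satisfied by any assignment obtained from FISP. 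Combining these two reductions yields the claimed equivalence, and in particular establishes the NP-hardness of FMP feasibility.
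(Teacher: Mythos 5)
Your proposal is correct and follows essentially the same route as the paper: the same dictionary (aircraft--employees, missions--tasks, $R_j=1$, $MT_j$ equal to the task duration, matching compatibility sets) together with a choice of maintenance and flight-hour parameters that neutralises all checks, clusters and consumption so that only the interval-assignment core remains. The only cosmetic difference is how the calendar constraint is switched off (you enlarge $E^M$ and $H^M$ beyond the horizon, the paper sets $E^m=E^M=M=N_t=0$ with $Rct^{Init}_i=|\mathcal{T}|+1$); your explicit treatment of the reverse direction and of the cluster constraints is if anything slightly more complete than the paper's.
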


	\begin{proof}
    For each employee $e \in E$, we create an analogous aircraft $i \in I$. We will use $e$ and $i$ indistinctly.

    For each task $p \in P$, we create a $j \in J$ mission in our problem with minimal assignment time equal to the duration $A_{j}^{min} = u_{j}$. We will use $p$ and $j$ indistinctly. 
	Start times and end times define the moment when the mission is active: $\mathcal{T}_{j} = t \in \{ s_{j}...f_{j}\}$. The aircraft need for each mission will be constant $R_{j} = 1$. The relationship between tasks and employees is also equivalent: $\mathcal{O}_{i} = S_{i}$ and $\mathcal{I}_{j} = E_{j}$.

    The flight hours for each aircraft on any mission will be $H_{j} = 0$ and the remaining flight hours after a maintenance operation is $H^{M} = 0$. 
	Scheduling of maintenance operations is not considered due to the following setting: $E^{m} = E^{M} = M = N_{t} = 0$ and $Rct_{i}^{Init} = |\mathcal{T}| + 1$.

    Thus, the objective function is reduced to a constant, implying the problem becomes a feasibility problem instead of an optimization problem.

Let:\\
$Q_{ref}$: for an instance $i \in \mathcal{P}_{ref}$: $\exists$ a solution with total resources $\leq k$? \\
$Q_{FMP}$: for an instance $I(i \in \mathcal{P}_{ref})$: $\exists$ a solution with total resources $\leq k$?

Thus, the feasibility version of our problem is NP-complete and its optimization version is NP-hard.


\end{proof}

\section{Mathematical formulation}
  \label{sec:model}

The following model provides a tight MIP formulation that solves the Military Flight and Maintenance Problem described in section \ref{sec:problem}. Maintenance operations will be referred as "checks".

\subsection{Input data}
    \subsubsection{Basic sets}

        \begin{tabular}{p{5mm}p{120mm}}
            $\mathcal{I}$     &  aircraft. \\
            $\mathcal{T}$     &  time periods included in the planning horizon. \\
            $\mathcal{J}$     &  missions. \\
            $\mathcal{K}$     &  cluster of aircraft that share the same functionality. \\
        \end{tabular}

    \subsubsection{Parameters}

        \begin{tabular}{p{10mm}p{115mm}}
            $H_j$             & amount of flight time required by mission $j$. \\
            $R_j$             & number of aircraft required by mission $j$. \\
            $MT_j$            & minimum number of consecutive periods an aircraft has to be assigned to mission $j$. \\
            $U^{min}$       & default aircraft flight time if it is not assigned to any mission nor in maintenance.\\
            $M$               & check duration in number of periods. \\
            $C^{max}$         & maximum number of simultaneous checks. \\
            $E^M$         & maximum number of periods between two consecutive checks. \\
            $E^m$         & minimum number of periods between two consecutive checks. \\
            $H^M$               & remaining flight time after a check. \\
            $N_t$             & number of aircraft known to be in maintenance in period $t$.\\
            $N^K_{kt}$         & number of aircraft in cluster $k$ known to be in maintenance in period $t$.\\
            $A^K_{kt}$         & maximum number of aircraft in cluster $k$ that can be simultaneously in maintenance in period $t$.\\
            $H^K_{kt}$         & minimum number of total remaining flight time for cluster $k$ at period $t$.\\
            $Rft^{Init}_i$  & remaining flight time for aircraft $i$ at the start of the planning horizon. \\
            $Rct^{Init}_i$  & remaining calendar time for aircraft $i$ at the start of the planning horizon. \\
        \end{tabular}

    \subsubsection{Parametric sets}

        \begin{tabular}{p{5mm}p{120mm}}
            $\mathcal{T}_j$     &  time periods $t \in \mathcal{T}$ in which mission $j$ is active. \\
            $\mathcal{J}_t $    &  missions $j \in \mathcal{J}$ to be realized in period $t$. \\
            $\mathcal{I}_j$     &  aircraft $i \in \mathcal{I}$ that can be assigned to mission $j$. \\
            $\mathcal{I}_k$     &  aircraft $i \in \mathcal{I}$ that are included in cluster $k$. One aircraft can belong to more than one cluster. \\
            $\mathcal{O}_i$     &  missions $j \in \mathcal{J}$ for which aircraft $i$ can be used. \\
            $\mathcal{A}^{init}_j$  & aircraft $i \in \mathcal{I}$ that have mission $j$ pre-assigned in the previous period to the start of the planning horizon. \\
        \end{tabular}

    \subsubsection{Time-related parametric sets}

        Several intermediate sets have been defined based on the input data in order to simplify constraint formulation.

        \begin{tabular}{p{8mm}p{117mm}}
            $\mathcal{T}^s_t$ &  time periods $t' \in \mathcal{T}$ such that $t' \in \{ \max{\{1, t - M+1\}},  ..., {t}\}$ (figure \ref{fig:gantt_windows}a). \\
            $\mathcal{T}^m_t$ &  time periods $t' \in \mathcal{T}$ such that $t' \in \{ {t}, ..., \min{\{|\mathcal{T}|, t + M + E^m-1\}}\}$ (figure \ref{fig:gantt_windows}a). \\
            $\mathcal{T}^M_t$ &  time periods $t' \leq |\mathcal{T}| - E^M - M$ such that $t' \in \{ t + M + E^m-1 , ...,  t + M + E^M-1 \}$ (figure \ref{fig:gantt_windows}a). \\
            $\mathcal{T}^{m_{ini}}_i$ &  time periods $t \in \mathcal{T}$ such that $t \in \{ 1, ..., \max{\{0, Rct^{Init}_i - E^M + E^m \}}\}$ (figure \ref{fig:gantt_windows}b). \\
            $\mathcal{T}^{M_{ini}}_i$ &  time periods $t \in \mathcal{T}$ such that $t \in \{ \max{\{0, Rct^{Init}_i - E^M + E^m \}} , ...,  Rct^{Init}_i \}$ (figure \ref{fig:gantt_windows}b). \\
            $\mathcal{T}^{MT}_{jt}$ &  time periods $t' \in \mathcal{T}$ such that $t' \in \{ \max{\{1, t - MT_j\}},  ..., {t}\}$ (figure \ref{fig:gantt_windows}c). \\
        \end{tabular}

    \begin{figure}
        \centering
        \includegraphics[width=\linewidth]{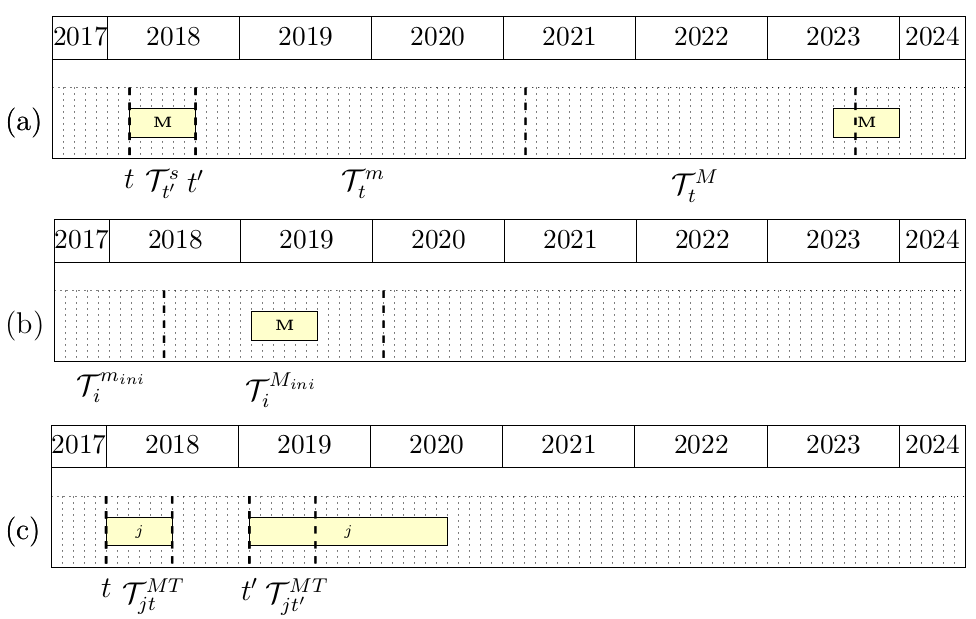}
        \caption{Example showing the maintenance-related time-parametric sets for aircraft $i$. (a) $\mathcal{T}_{t'}^{s}$ refers to the previous $M$ periods to period $t'$. $\mathcal{T}_{t}^{m}$ refers to the periods where a check cannot be planned after starting a check in period $t$. Finally, $\mathcal{T}_{t}^{M}$ refers to the periods where a check needs to be scheduled, after starting a check in period $t$. (b) $Rct_{i}^{Init}$ is equal to 40, meaning a check has to be planned between months number 10 and 39. (c) $\mathcal{T}_{jt}^{MT}$ refers to the periods where the assignment of mission $j$ needs to be kept. In this case, the size is 6.} 
        \label{fig:gantt_windows}
    \end{figure}

\subsection{Variables}

    The following decision variables control the assignment of missions and checks to aircraft.

    \begin{tabular}{p{8mm}p{117mm}}
        $a_{jti}$   &  =1 if mission $j \in J$ in period $t \in \mathcal{T}_j$ is realized with aircraft $i \in \mathcal{I}_j$, 0 otherwise. \\  
        $a^s_{jti}$ &  =1 if aircraft $i$ starts a new assignment to mission $j$ in period $t$. If $a_{jti} = 1$ and $a_{j(t-1)i} = 0$. \\  
        $m_{it}$    &  =1 if aircraft $i \in I$ starts a check in period $t \in \mathcal{T}$, 0 otherwise. \\
    \end{tabular}

    The following decision variables control the used and remaining flight time in aircraft.

    \begin{tabular}{p{8mm}p{117mm}}
        $u_{it}$    &  flown time (continuous) by aircraft $i \in I$ during period $t \in \mathcal{T}$. \\        
        $rft_{it}$  &  remaining flight time (continuous) for aircraft $i \in I$ at the end of period $t \in \mathcal{T}$. \\  
    \end{tabular}

    \paragraph{Fixed values}

    Note that $a_{jti}$ and $m_{it}$ are initially set up to 0 for all aircraft already in maintenance at the beginning of the planning horizon for the remaining time periods of the check. $N_{t}$ is calculated based on this information.
Similarly, for aircraft that have not yet complied with their minimum mission assignment duration at the beginning of the planning horizon, $a_{jti}$ is fixed to comply with the constraints.

\subsection{Objective function and constraints}

    Two objectives have been studied. Objective (\ref{eq:objective1}) minimizes the number of checks. (\ref{eq:objective2}) combines the first one with the goal of maximizing the final total flight hours potential of the fleet.

    \begin{align}
        & \text{Min}\; \sum_{t \in \mathcal{T}, i \in \mathcal{I}} m_{it}  \label{eq:objective1}\\
        & \text{Min}\; \sum_{t \in \mathcal{T}, i \in \mathcal{I}} m_{it} \times H^M - \sum_{i \in \mathcal{I}} rft_{i|\mathcal{T}|} \label{eq:objective2}
    \end{align}
    
    The first term counts all the flight hours given to aircraft following checks and the second term quantifies the amount of remaining flight hours for all aircraft at the end of the planning horizon. These two objectives have the same units, can be easily compared and ensure the aircraft are used in the most efficient way. 

    The following constraints are used in the model:       
    \begin{align}
        & \sum_{t' \in \mathcal{T}^s_t} \sum_{i \in \mathcal{I}} m_{it'} + N_t \leq C^{max}
          & t \in \mathcal{T} \label{eq:capacity1}\\
        & \sum_{i \in \mathcal{I}_j} a_{jti} \geq R_j
                & j \in \mathcal{J}, t \in \mathcal{T}_j  \label{eq:missionres}\\
        & \sum_{t' \in \mathcal{T}^s_t} m_{it'} + \sum_{j \in \mathcal{J}_t \cap \mathcal{O}_i} a_{jti} \leq 1 
                & t \in \mathcal{T}, i \in \mathcal{I} \label{eq:state}
    \end{align}

    Maintenance capacity is controlled by (\ref{eq:capacity1}). The aircraft requirements of missions are defined by (\ref{eq:missionres}). Constraints (\ref{eq:state}) ensure that an aircraft can only be used for one mission or undergo check in the same period.

    \begin{align}
        & a^s_{jti} \geq a_{jti} - a_{j(t-1)i}
                & t =1, ..., \mathcal{T}, j \in \mathcal{J}_t, i \in \mathcal{I}_j \label{eq:start1} \\
        & a^s_{j0i} \geq a_{j0i} - 1 \!1_{i \in \mathcal{A}^{init}_j}
                & j \in \mathcal{J}_0, i \in \mathcal{I}_j \label{eq:start2} \\        
        & \sum_{t' \in \mathcal{T}^{MT}_{jt}} a^s_{jt'i} \leq a_{jti} 
        & j \in \mathcal{J}, t \in \mathcal{T}_j, i \in \mathcal{I}_j \label{eq:start3}
    \end{align}

    Constraints (\ref{eq:start1}) captures period $t$ where aircraft $i$ is firstly assigned to mission $j$ i.e. it was not assigned to it in period ($t - 1$). Constraints (\ref{eq:start2}) are introduced for the first period in the planning horizon.
Constraints (\ref{eq:start3}) control the minimum duration of a consecutive mission assignment. If aircraft $i$ was firstly assigned to mission $j$ in period $t$, it has to be assigned to it during the following $t' \in \mathcal{T}_{jt}^{MT}$ periods. This is a stronger version of the constraint $a_{jt'i}^{s} \leq a_{jti}$.

    To our knowledge, these constraints have not been taken into account in previous military FMP problems.

    \begin{align}
       & \sum_{t' \in \mathcal{T}^s_t} \sum_{i \in \mathcal{I}_k} m_{it'} + N^K_{kt} \leq A^K_{kt}
        &k \in \mathcal{K}, t \in \mathcal{T} \label{eq:serviceability-cluster} \\
       & \sum_{i \in \mathcal{I}_k} rft_{it} \geq H^K_{kt}
        &k \in \mathcal{K}, t \in \mathcal{T} \label{eq:sustainability-cluster}
    \end{align}

    Constraints (\ref{eq:serviceability-cluster}) guarantee a minimum serviceability of aircraft for each cluster $k$. A cluster is defined by the largest group of aircraft that is required exclusively for at least one mission. Constraints (\ref{eq:sustainability-cluster}) ensure there is a minimum amount of remaining flight time for each cluster $k$.

    \begin{align}
         & u_{it} \geq \sum_{j \in \mathcal{J}_t \cap \mathcal{O}_i} a_{jti} H_j 
            & t =1, ..., \mathcal{T}, i \in \mathcal{I} \label{eq:flight1}\\
         & u_{it} \geq U^{min} (1 - \sum_{t' \in \mathcal{T}^s_t} m_{it'})
            & t =1, ..., \mathcal{T}, i \in \mathcal{I} \label{eq:flight2}\\
         & u_{it} \in [0, \max_j{\{H_j\}}]
            & t =1, ..., \mathcal{T}, i \in \mathcal{I} \label{eq:flight_lower}\\
         & rft_{it} \leq rft_{i(t-1)} + H^M m_{it} - u_{it}
            & t =1, ..., \mathcal{T}, i \in \mathcal{I} \label{eq:rft_upper}\\
        & rft_{i0} = Rft^{Init}_i
               & i \in \mathcal{I} \label{eq:rft_initial}\\
        & rft_{it} \geq H^M m_{it'}
                & t \in \mathcal{T}, t' \in \mathcal{T}^s_t, i \in \mathcal{I}\label{eq:rft_lower}\\ 
        & rft_{it} \in [0,H^M]
                & t \in \mathcal{T}, i \in \mathcal{I} \label{eq:mu}
    \end{align}

    The flight time per aircraft and period is calculated in (\ref{eq:flight1})-(\ref{eq:flight_lower}). The remaining flight time is defined by (\ref{eq:rft_upper})-(\ref{eq:rft_initial}) and its limits by (\ref{eq:rft_lower})-(\ref{eq:mu}).

    \begin{align}
        & m_{it'} + m_{it} \leq 1
          & t \in \mathcal{T}, t' \in \mathcal{T}^m_t, i \in \mathcal{I}\label{eq:rct_min}\\ 
        & \sum_{t' \in \mathcal{T}^M_t} m_{it'} \geq  m_{it}
          & t \in \mathcal{T}, i \in \mathcal{I}\label{eq:rct_max}\\
        & m_{it} = 0
          & t \in \mathcal{T}^{m_{ini}}_i, i \in \mathcal{I}\label{eq:rct_min_init} \\
        & \sum_{t \in \mathcal{T}^{M_{ini}}_i} m_{it} \geq  1 
          & i \in \mathcal{I}\label{eq:rct_max_init}
    \end{align}

    The minimum and maximum calendar times are defined by (\ref{eq:rct_min}) and (\ref{eq:rct_max}) respectively. Constraints (\ref{eq:rct_min}) guarantee that if a check is done in some period $t$, we know that another one cannot be done in the immediately consecutive $t' \in \mathcal{T}_{t}^{m}$ periods. Constraints (\ref{eq:rct_max}) ensure that if a check is planned in period $t$, we need to start at least one check in periods $t' \in \mathcal{T}_{t}^{M}$. Constraints (\ref{eq:rct_min_init}) and (\ref{eq:rct_max_init}) control the minimum and maximum remaining calendar times respectively at the beginning of the planning period. They follow the same logic as constraints (\ref{eq:rct_min}) and (\ref{eq:rct_max}), respectively.

    To our knowledge, these constraints have not been taken into account in previous military FMP problems.

\section{Computational experiments}
\label{sec:experim}

The data set for the numerical experiment was generated on the basis of possible data structures used by Air Forces. The main 2 parameters used to generate instances were: the length of the planning horizon and the number of active missions per period. The problem instances were generated in the following way.

\subsection{Notations}\label{nomenclature}

	A discrete choice between optional values is indicated by values separated by commas. Intervals indicate that a value was chosen in a ``uniform random way'' from the intervals for continuous values or through random sampling with replacement for integer values. 
	Values with a * are deterministic control parameters.

\subsection{Sets}\label{sets}

	\begin{small}
	\begin{longtable}[]{lll}
		\toprule
		Code 			& Parameter 							& Value 										\tabularnewline
		\midrule
		\endhead
		\(|J^P|\) 	& Total number of parallel missions* 		& 1, 2, 3				 						\tabularnewline
		\(| I |\) 	& Number of aircraft* 					& 10, 30, 50, 130, 150, 200						\tabularnewline
		\(| T |\) 	& Number of periods* 					& 60, 90, 120, 180 								\tabularnewline
		\(C^{perc}\)& Maintenance capacity (percentage)*	& 0.10, 0.15, 0.2 								\tabularnewline
		\(C^{max}\) & Maintenance capacity 					& \(\lceil C^{perc} \times | I | \rceil\) \tabularnewline
		\bottomrule
	\end{longtable}
	\end{small}

\subsection{Maintenances}\label{maintenances}

	\begin{small}
	\begin{longtable}[]{lll}
		\toprule
		Code & Parameter & Value\tabularnewline
		\midrule
		\endhead
		\(E^M\) & Time limit in periods* & 40, 60, 80\tabularnewline
		\(E^s\) & Time limit window* & 20, 30, 40\tabularnewline
		\(H^M\) & Flight hours limit* & 800, 1000, 1200\tabularnewline
		\(E^m\) & Time limit in periods & \((E^M - E^s)\)\tabularnewline
		\(M\) 	& Check duration* & 4, 6, 8\tabularnewline
		\bottomrule
	\end{longtable}
	\end{small}

\subsection{Missions and flight hours}\label{missions}

	\begin{small}
	\begin{longtable}[]{lll}
		\toprule
		Code & Parameter & Value\tabularnewline
		\midrule
		\endhead
		\(| {T}_j |\) 		& Duration (periods)				& 6 -- 12				\tabularnewline
		\(MT_j\) 			& Minimum assignment (periods) 		& 2, 3, 6				\tabularnewline
		\(R_j\) 			& Number of required aircraft 		& 2 -- 5	 			\tabularnewline
		\(H_j\) 			& Number of required hours 			& triangular(30, 50, 80)\tabularnewline
		\(U^{min}\) 	 	& Default assignment flight hours* 	&0, 5, 15, 20 			\tabularnewline
		\(Y_j\) 			& Type 					   			& choice 1				\tabularnewline
		\(Q_j\) 			& Standard 				   			& 10\% chance			\tabularnewline
		\bottomrule
	\end{longtable}
	\end{small}

	The flight hours are generated using a triangular distribution between 30 and 80 with a mode of 50 and rounded down to the closest integer value. Regarding types and standards, see section \ref{mission-aircraft-compatibility}.



\subsection{Missions duration and start}\label{missions-durations-and-start}

	The following logic has been used in creating missions, assuming $N=|J^P|$ parallel missions at any given moment:

	\begin{enumerate}
		 \item At the beginning we create $N$ missions with a random duration.
		 \item Every time a mission ends, we create a new mission with new random parameters.
		 \item When the end of the planning horizon is reached: the last mission is truncated.
	\end{enumerate}

	This guarantees that there are \emph{always} $N$ missions in parallel at any given time.

\subsection{Aircraft}\label{aircraft}
	
	Each aircraft has specific characteristics that allow it to accomplish missions. These characteristics are represented by a type and a standard. More detail on types and standards is discussed in \ref{mission-aircraft-compatibility}.

	\begin{small}
	\begin{longtable}[]{lll}
		\toprule
		Code & Parameter & Value\tabularnewline
		\midrule
		\endhead
		\(Y_i\) & Type & choice\tabularnewline
		\(Q_i\) & Standards & choice\tabularnewline
		\bottomrule
	\end{longtable}
	\end{small}

\subsection{Aircraft's initial state}\label{aircraft-initial-state}

	\begin{small}
	\begin{longtable}[]{lll}
		\toprule
		Code 				& Parameter 									& Value 							\tabularnewline
		\midrule
		\endhead
		\(NP\) 				& Percentage of aircraft starting in maintenance.  & \(0 - C^{max}\)					\tabularnewline
		\(NV\) 				& Number of aircraft in maintenance. 				& \(| I | \times NP\)			\tabularnewline
		\(At_j\)			& Number of periods previously done under mission \(j\)& \(0 - 2MT_j\)			 			\tabularnewline
		\(Rct^{Init}_i\) 	& Remaining calendar time 						& \(0 - E^M\)						\tabularnewline
		\(Rct^{I2}_i\) 		& Remaining calendar time + noise 				& \(Rct^{Init}_i\) + {[}-3 -- 3{]}	\tabularnewline
		\(Rft^{Init}_i\) 	& Remaining flight time 							& \(Rct^{I2}_i \frac{H^M}{E^M}\)\tabularnewline
		\(NM_i\) 			& Remaining maintenance periods						& 0 -- \(M\)							\tabularnewline
		\bottomrule
	\end{longtable}
	\end{small}

	The initial states will be simulated according to the following rules. To obtain the aircraft $I$ that are in maintenance: (i) $NV$ aircraft will be taken randomly from the set of aircraft. (ii) for these aircraft, $NM_{i}$ will be generated randomly between 0 and $|M|$.

	For the remaining $I - NV$ aircraft that are not in maintenance: (i) $Rct_{i}^{Init}$ and $Rft_{i}^{Init}$ will be generated in a correlated way. The first one is generated randomly and the second one is adjusted. 
	(ii) for each mission $j$ belonging to the set of missions active at the beginning of the planning period: $R_{j}$ aircraft will be taken and assigned to each such a mission with $At_{j}$ previous assignments.


\subsection{mission-aircraft compatibility}\label{mission-aircraft-compatibility}



	Each mission and aircraft has one and only one type. They have to match for mission execution. One aircraft may have multiple standards. A mission may have at most one standard. If it the case, the standards also have to match between missions and aircraft for mission execution.

	Mission parameters are generated in the following way. For each mission, a type $Y_{j} \in Y$ and a standard $Q_{j} \in Q$ will be assigned. $Q_{j}$ can be null, which implies the mission has no standard. A minimum number of aircraft of each type is calculated based on $\sum_{\{ j \in J \mid Y_{j} = y\}}^{}R_{j}\,,\forall y$.

	In order to guarantee a feasible number of aircraft to comply with missions, the requirements for each type of aircraft are calculated for the whole planning horizon. Then, this number of aircraft of each type is created, at least. For the remaining aircraft, their type is chosen randomly taking the weight of the requirements for each type of aircraft into account. In order to guarantee a feasible number of aircraft per standard, we chose to generate twice the number of required standards among the aircraft.

\subsection{Cluster and service levels}\label{cluster-service-levels}

	A cluster is a group of missions that have exactly the same requirements (i.e. same type and standard). To explain the model input parameters, the following notations are needed: $Q_{k}$ is the number of candidates for cluster $k$ and $QH_{k} = Q_{k} \times H^{M}$ is the maximum flight hours for the whole set of aircraft in a given cluster $k$.

	\begin{small}
	\begin{longtable}[]{lll}
		\toprule
		Code 			& Parameter 										& Value 				\tabularnewline
		\midrule
		\endhead
		$AN^K$	& Minimal number of serviceable aircraft per cluster.*		& 1, 2, 3				\tabularnewline
		$AP^K$	& Percentage of serviceable aircraft per cluster.* 		& 0.05, 0.1, 0.2		\tabularnewline
		$HP^K$ 	& Percentage of sustainability per cluster.*		 		& 0.3, 0.5, 0.7			\tabularnewline
		$H^K_{kt}$& Minimal remaining flight hours for cluster $k$. 			& $HP^K \times QH_k$	\tabularnewline
		$A^K_{kt}$& Minimal serviceable aircraft for cluster $k$. 			& $\max{\{AP^K Q_k, AN^K\}}$	\tabularnewline
		\bottomrule
	\end{longtable}
	\end{small}

\subsection{objective functions}

    Two objectives have been studied. When using the configuration $\max\{ rft\}$ = 1, objective \ref{eq:objective2} is being used. Otherwise, $\max\{ rft\}$ = 0, objective \ref{eq:objective1} is used.

\subsection{Software tools}

	Python with the PuLP library was used to build the models. To solve the models, CPLEX 12.8 was used. 
	All tests were run on a 12-core, 64 GB RAM machine running Linux Fedora 20 with a CPU speed (in MHz) of 2927.000.



\section{Results}
\label{sec:results}

Several scenarios were created using the techniques explained in section \ref{sec:experim}. The base and studied scenarios were built based on the values reported in table \ref{tab:summary-scenario}. Only one parameter was changed at a time.

\subsection{Base scenario}

    For each scenario, 50 instances were randomly generated. Among scenarios, the same position of instance always had the same random seed. This was done so that random differences between instances in the same position among different scenarios would be as small as possible and comparisons could be more broadly generalized.

    An experiment is defined as a group of scenarios. Two experiments were run to analyze the sensitivity to parameters and problem size. One third experiment involved the use of generated feasible solution as input to the model.

    \begin{small}
    \begin{longtable}[]{llll}
    	\toprule
    	Parameter      & Name & Base scenario  & Studied scenarios \tabularnewline
    	\midrule
    	\endhead
        $E^s$          &  maintenance calendar time size     & 30             & 20, 40           \tabularnewline
        $E^M$          &  maintenance calendar time      & 60             & 40, 80           \tabularnewline
        $H^M$          &  maintenance flight hours         & 1000           & 800, 1200        \tabularnewline
        $C^{perc}$     &  capacity in percentage of fleet        & 0.15           & 0.1, 0.2         \tabularnewline
        $M$            &  maintenance duration        & 6              & 4, 8             \tabularnewline
        $| J^P |$      &  number of parallel tasks    & 1              & 2, 3, 4          \tabularnewline
        $| T |$        &  number of periods in horizon            & 60             & 120, 140         \tabularnewline
        $U^{min}$      &  minimum flight hours consumption      & 0              & 5, 15, 20        \tabularnewline
        $HP^K$         &  minimum $rft$ per cluster        & 0.5            & 0.3, 0.7         \tabularnewline
        $\max \{rft\}$ &  maximize $rft$ at the end         &  0             & 1                \tabularnewline
    	\bottomrule
    	\caption{Experiments and studied scenarios. 'Base scenario' corresponds to the default values. 'Studied scenarios' corresponds to the values that were used to create the scenarios.}
    	\label{tab:summary-scenario}
    \end{longtable}
    \end{small}

    \begin{figure}
        \centering
        \includegraphics[width=0.8 \linewidth]{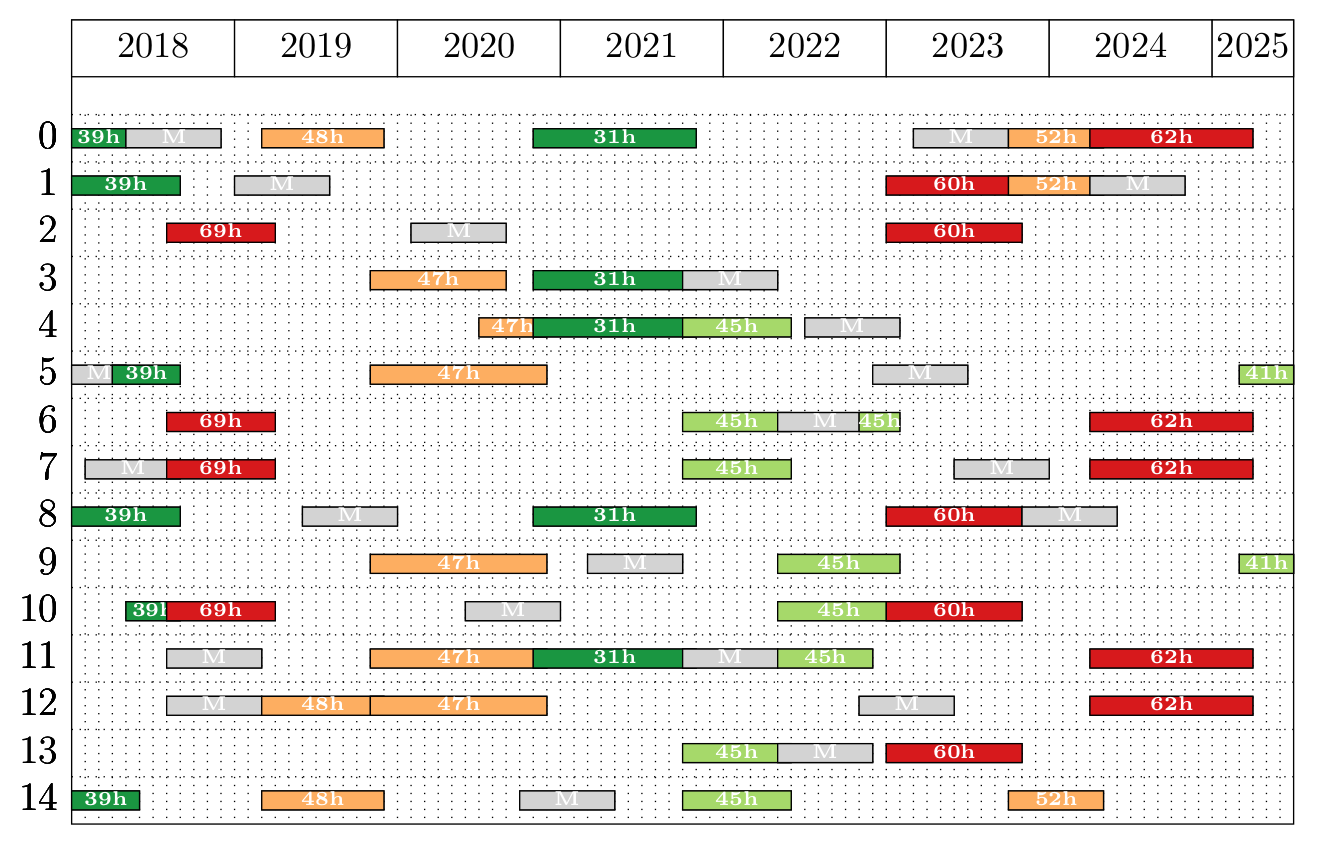}
        \caption{A solution for the base case. Each row is an aircraft, each column represents a month. Missions in red consume the most flight hours, missions in green the least, numbers show the monthly hourly consumption for missions. "M" is used to signal checks. } \label{fig:gantt_example}
    \end{figure}

    An example of a solution for the base case is shown in figure \ref{fig:gantt_example}.

\subsection{Parameter sensibility analysis}

    Experiment 1 consisted in analyzing the sensitivity of the model to changes in its input parameters. Table \ref{tab:experiment1} summarizes the performance after solving the model with each scenario. It can be seen that most instances were solved to optimality, although the resolution times were close to the imposed 1-hour limit. The variations in the size of the problem are due to the differences in the solver's pre-solving capabilities given the fact that these scenarios did not change the size of the original problem.

    \begin{small}
    \begin{table}
        \centering
    	\begin{tabular}{lrrrrrrrr}
\toprule
         case &  $t^{min}$ &  $t^{avg}$ &  non-zero &   vars &   cons &  no-int &  inf &  $g^{avg}$ \\
\midrule
 $HP^{K}$=0.3 &        1.8 &        5.2 &   50976.5 & 4275.5 & 6273.0 &       0 &    0 &        0.0 \\
 $H^{M}$=1200 &        2.1 &       76.7 &   51030.3 & 4295.0 & 6298.2 &       0 &    0 &        0.1 \\
   $E^{s}$=20 &        1.6 &      172.8 &   29772.8 & 3826.1 & 5120.3 &       0 &    3 &        0.2 \\
   $E^{s}$=40 &        4.0 &      266.8 &   64152.6 & 4496.1 & 6994.6 &       0 &    1 &        0.4 \\
         base &        2.2 &      310.6 &   51167.1 & 4310.7 & 6315.9 &       0 &    1 &        0.3 \\
   $E^{M}$=40 &        8.1 &      530.9 &   68612.7 & 4525.5 & 7632.9 &       0 &    0 &        0.2 \\
   $E^{M}$=80 &        1.5 &     1250.6 &   28257.9 & 3877.8 & 5010.4 &       0 &    3 &        1.9 \\
 $HP^{K}$=0.7 &       80.7 &     1746.9 &   50805.8 & 4393.9 & 6320.6 &       0 &   42 &        2.9 \\
  $H^{M}$=800 &        4.4 &     2168.5 &   51219.7 & 4327.2 & 6327.2 &       0 &    5 &        2.7 \\
  $U^{min}$=5 &       24.6 &     2650.3 &   60950.1 & 5525.3 & 8583.6 &       0 &    3 &        4.3 \\
 $U^{min}$=20 &     3600.0 &     3600.0 &   53562.3 & 5379.8 & 8149.6 &      25 &    8 &        5.2 \\
 $U^{min}$=15 &     3600.0 &     3600.0 &   60716.4 & 5529.0 & 8573.6 &      10 &    6 &        6.3 \\
\bottomrule
\end{tabular}
        \caption{Experiment 1 summary per scenario sorted by average solving time. 'vars', 'cons' and 'non-zero' correspond to the average number of variables, constraints and non-zero values in the 50 instances, respectively. The 'no-int' column corresponds to the number of instances in which an integer solution was not found, even though the problem was not considered infeasible. $t^{min}$ and $t^{avg}$ refer to the minimum and average values for the solving times, in seconds. All $t^{\max}$ values are 3600. $g^{avg}$ corresponds to the average gap, in \%.}
        \label{tab:experiment1}
    \end{table}
    \end{small}

    \begin{figure}
        \centering
        \includegraphics[width=0.8 \linewidth]{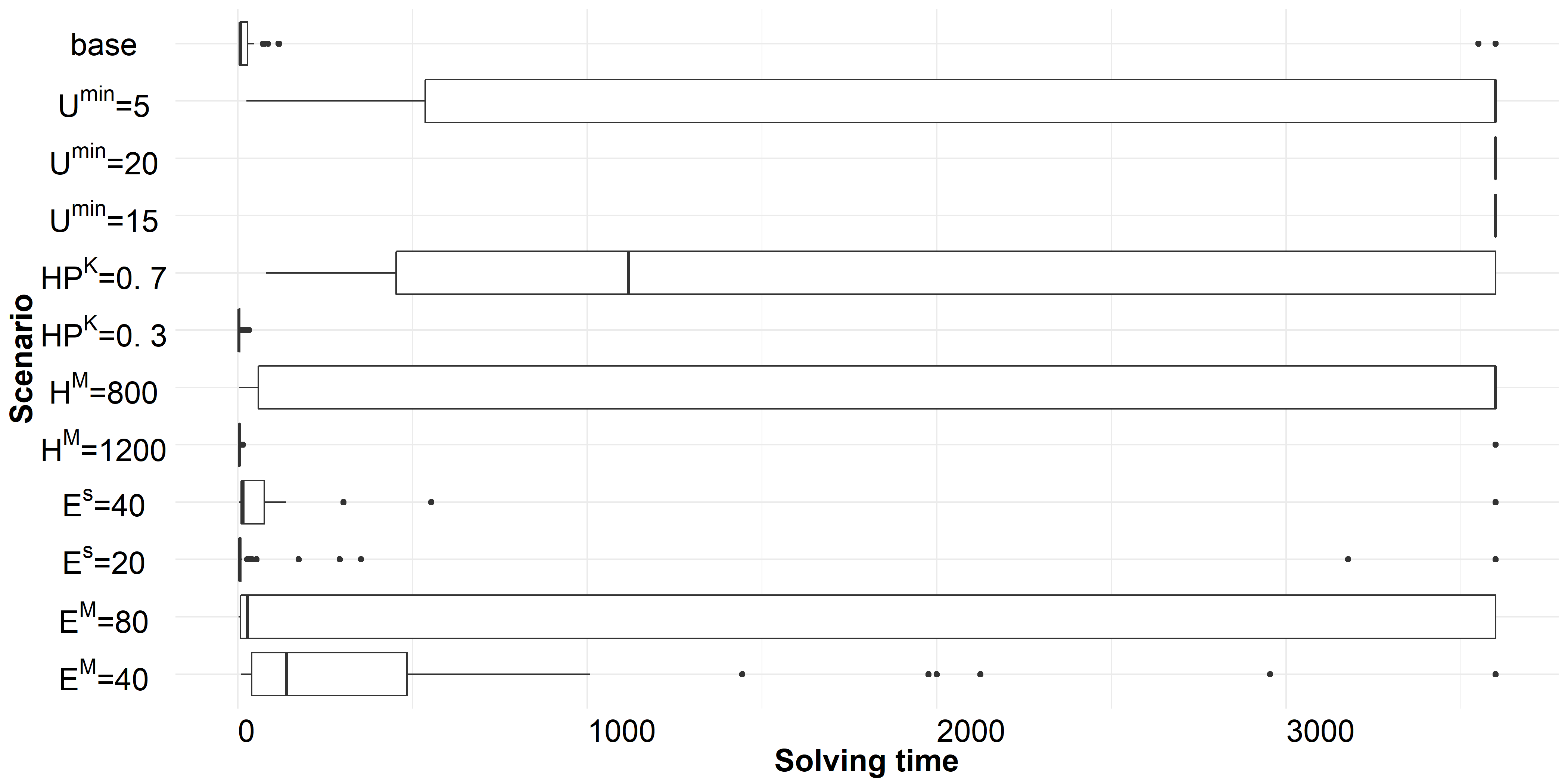}
        \caption{Box-plot showing the distribution of solution times for each of the instances of Experiment 1. Scenarios are shown in the X-axis while the times are shown in the Y-axis.} \label{fig:experiment1-times}
    \end{figure}

    The results obtained show that parameters with influence on execution times and in remaining relative gaps included the ones that regulate the frequency of checks, e.g. the amount of flight hours between checks ($H^{M}$): increasing available hours, without changing the flight load, will dramatically reduce solution times (see figure \ref{fig:experiment1-times}). This modification also has an impact on whether a solution is feasible or not (see table \ref{tab:experiment1}). Another parameter that had a very sensible impact was the minimum amount of sustainability per cluster $HP^{K}$. The impact of both of these parameters can also be confirmed via the difference in the average needed nodes to reach optimality, shown in table \ref{tab:experiment1-relax}.



    \begin{figure}
        \centering
        \includegraphics[width=0.8 \linewidth]{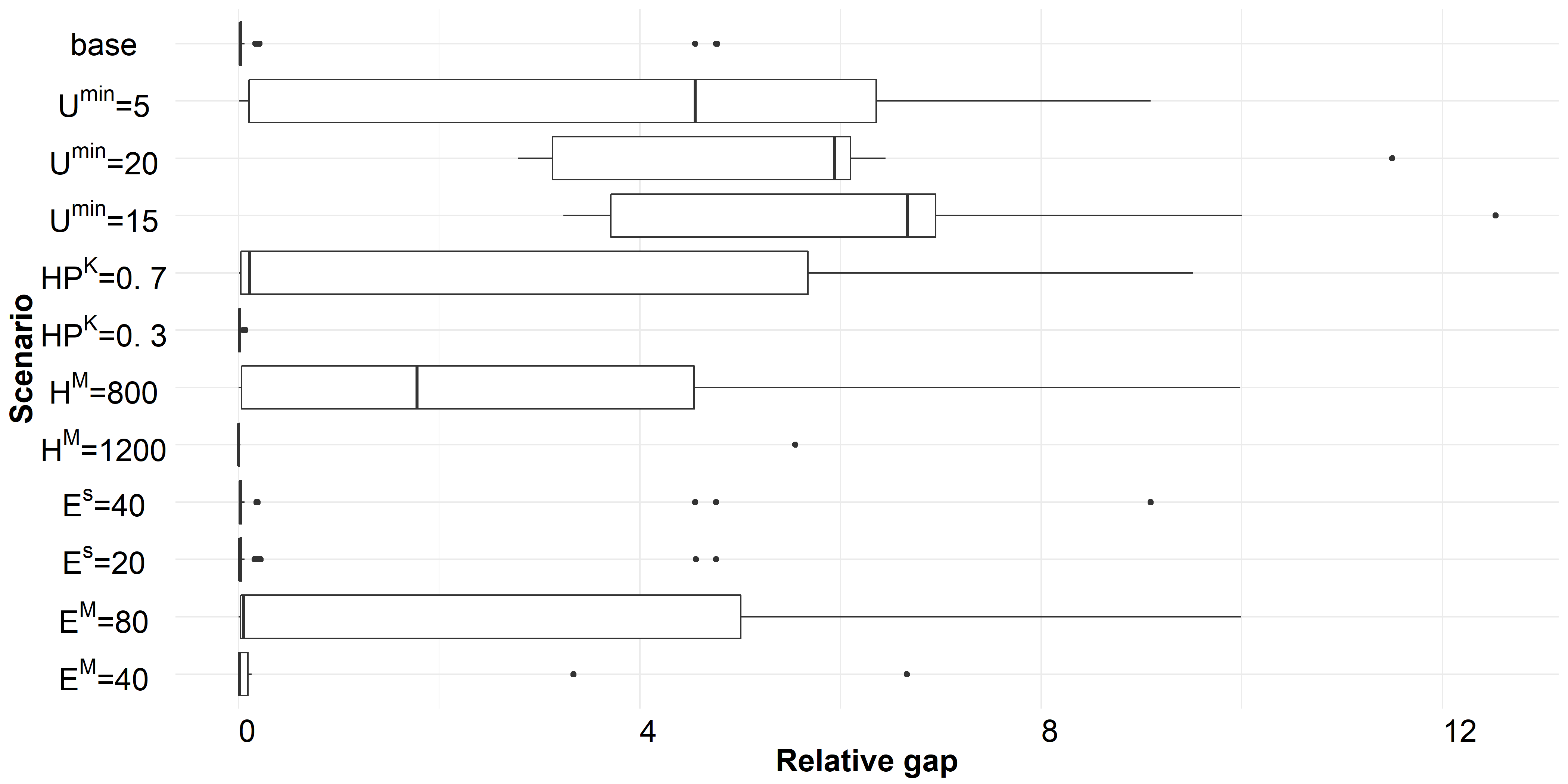}
        \caption{Box-plot showing the distribution of relative gaps for each of the instances of Experiment 1. Scenarios are shown in the X-axis while the gaps are shown in the Y-axis.} \label{fig:experiment1-gaps}
    \end{figure}



    The minimum consumption of flight hours per period $U^{min}$ makes the problem significantly harder to solve. This can be confirmed both via the remaining gaps, solving times and with several instances where a feasible solution was not found after 1 hour (table \ref{tab:experiment1}). The effect was evident even when adding a relatively small quantity of consumption hours ($U^{min}$=5), although a greater impact is correlated with a higher minimum consumption. In addition, table \ref{tab:experiment1} shows that the solver's pre-processor is less able to reduce the problem size in these scenarios than in most of the other ones. Table \ref{tab:experiment1-relax} shows how, although the initial relaxation is particularly bad for these scenarios, the cuts phase (helped by a manual configuration of the solver) significantly improves the relaxation.

    \begin{small}
    \begin{table}
        \centering
        \begin{tabular}{lrrrr}
\toprule
           case &  rinit &  rcuts &  icuts &   nodes \\
\midrule
           base &   15.9 &    1.2 &    8.0 &  4335.9 \\
     $E^{s}$=20 &    2.1 &    0.7 &    2.9 &  2999.0 \\
     $E^{s}$=40 &   17.5 &    2.5 &   23.3 &  4384.3 \\
     $E^{M}$=40 &   41.0 &    5.0 &    7.3 & 15172.6 \\
     $E^{M}$=80 &    4.5 &    2.3 &    7.4 & 46174.8 \\
   $H^{M}$=1200 &   17.1 &    0.1 &    3.5 &   121.2 \\
    $H^{M}$=800 &   10.3 &    5.0 &   17.8 & 89602.2 \\
   $HP^{K}$=0.3 &   16.7 &    0.1 &    3.1 &   372.8 \\
   $HP^{K}$=0.7 &   16.7 &    9.1 &   13.5 & 26534.8 \\
   $U^{min}$=15 &   23.8 &    7.4 &    8.0 &      \\
   $U^{min}$=20 &   20.1 &    6.3 &    3.5 &       \\
    $U^{min}$=5 &   18.2 &    6.8 &   22.3 & 13642.6 \\
 $C^{perc}$=0.2 &   15.1 &    1.0 &    8.8 &  1386.2 \\
\bottomrule
\end{tabular}

        \caption{Experiment 1: mean performance of relaxations per scenario (in \% difference). 'rinit' compares the first continuous relaxation and the best solution found. 'rcuts' compares the continuous relaxation after cuts in the root node and the best solution found. 'icuts' compares the best solution found after cuts in the root node and the best solution found. Finally, 'nodes' measures the nodes in the branch and bound it took to prove optimality in the instances where it was proved.}
        \label{tab:experiment1-relax}
    \end{table}
    \end{small}

\subsection{Problem size sensibility analysis}
    
    Experiment 2 studied variants in the problem size and the objective function. First of all, the horizon was increased in size by changing the amount of planning periods. Second, the number of parallel tasks was increased with an equivalent increase in the size of the fleet. Lastly, an objective function that maximizes the final state in addition to minimizing the number of checks was tested.

    \begin{small}
    \begin{table}
        \begin{tabular}{lrrrrrrrr}
\toprule
             case &  $t^{min}$ &  $t^{avg}$ &  non-zero &    vars &    cons &  no-int &  inf &  $g^{avg}$ \\
\midrule
             base &        2.2 &      310.6 &   51167.1 &  4310.7 &  6315.9 &       0 &    1 &        0.3 \\
      $| T |$=120 &       19.9 &      376.0 &   88668.3 &  5815.8 &  9161.6 &       0 &    6 &        0.8 \\
      $| J^P |$=2 &       20.1 &     1313.9 &  101572.8 &  8317.9 & 12266.2 &       0 &    5 &        0.8 \\
      $| T |$=140 &       44.5 &     1651.0 &  115910.9 &  6738.7 & 11081.5 &       0 &    4 &        3.3 \\
 $\max \{rft\}$=1 &        7.5 &     2198.8 &   51167.1 &  4310.7 &  6315.9 &       0 &    1 &        1.6 \\
      $| J^P |$=3 &       62.7 &     2723.7 &  157213.7 & 12907.6 & 18915.8 &       1 &    8 &        2.5 \\
      $| J^P |$=4 &      114.7 &     3228.6 &  209747.2 & 17045.4 & 24947.4 &       3 &    9 &        2.6 \\
\bottomrule
\end{tabular}

        \caption{Experiment 2 summary per scenario sorted by average solving time. 'vars', 'cons' and 'non-zero' correspond to the average number of variables, constraints and non-zero values in the 50 instances, respectively. The 'no-int' column corresponds to the number of instances in which an integer solution was not found, even though the problem was not considered infeasible. $t^{\min}$ and $t^{avg}$ refer to the minimum and average values for the solving times, in seconds. All $t^{\max}$ values are 3600. $g^{avg}$ corresponds to the average gap, in \%.}
        \label{tab:experiment2}
    \end{table}
    \end{small}

    By activating maximization of the end state for the whole fleet ($\max\{ rft\}$=1), the efficiency of the solving process, measured in solving times, declines significantly. Another condition with a similar effect is increasing the size of the planning horizon ($|T|$=140). Both scenarios seem to share the same difficulty.

    A similar effect was detected when increasing the number of parallel missions $|J^{P}|$ and the size of the fleet proportionally. This effect can be explained by the fact that the model size grows in proportion to the number of parallel missions (see 'non-zero' column in table \ref{tab:experiment2}).

    To sum up, although the model performance seems to deteriorate with larger instances, the effect in resulting gaps seems to keep a lineal relationship with regards to $|J^{P}|$ and $|T|$, for the studied scenarios and the resulting gaps are still acceptable (see \ref{fig:experiment2-gaps}).

    \begin{figure}
        \centering
        \includegraphics[width=0.8 \linewidth]{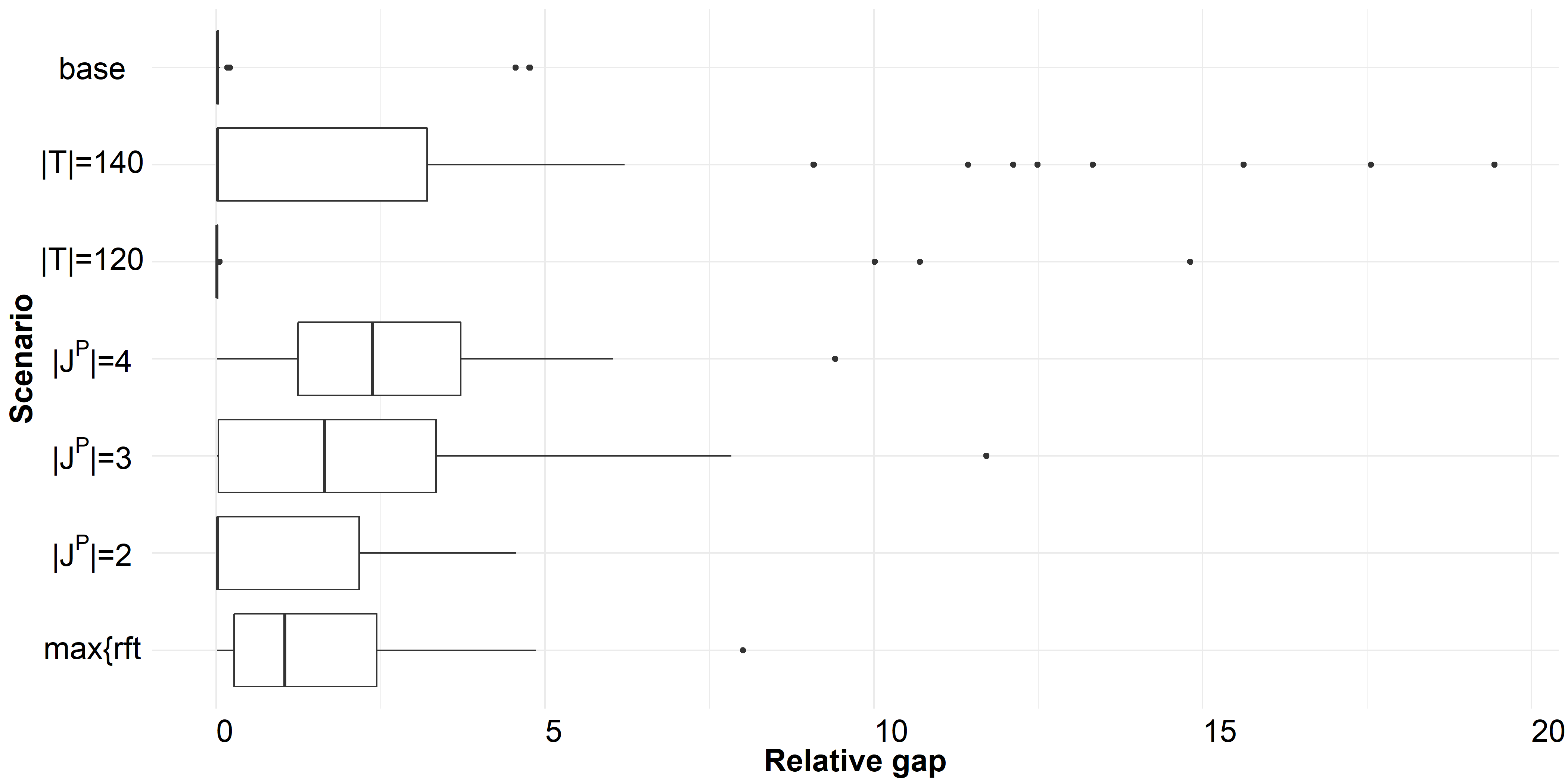}
        \caption{Box-plot showing the distribution of relative gaps for each of the instances of Experiment 2. Scenarios are shown in the X-axis while the gaps are shown in the Y-axis.} \label{fig:experiment2-gaps}
    \end{figure}

    \begin{figure}
        \centering
        \includegraphics[width=0.8 \linewidth]{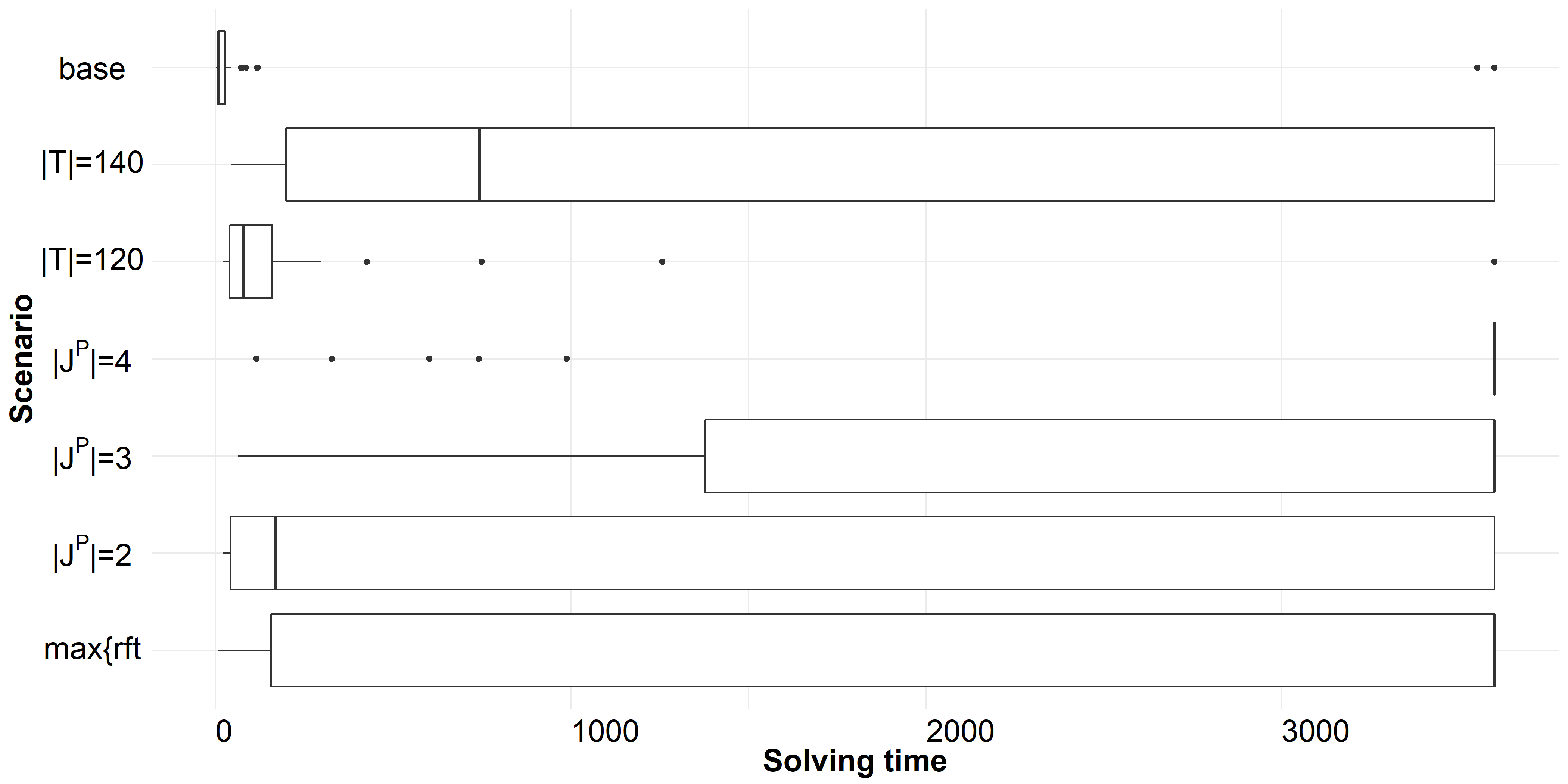}
        \caption{Box-plot showing the distribution of solution times for each of the instances of Experiment 2. Scenarios are shown in the X-axis while the times are shown in the Y-axis.} \label{fig:experiment2-times}
    \end{figure}

    Table \ref{tab:experiment2-relax} how the quality of the cuts phased decreases with the size of the planning horizon, in relaxation quality as in integer solution quality. Also, the number of nodes needed to find an optimal solution considerably increases in the ($| T |$=140) scenario. This is possible due to the fact that aircraft need a third maintenance in these circumstances and the possible maintenance combinations grow in a combinatorial sense. Lastly, guaranteeing an optimal solution appears to prove difficult when considering the final state in the objective function, as seen in the average number of nodes needed.

    \begin{small}
    \begin{table}
        \centering
        \begin{tabular}{lrrrr}
\toprule
             case &  rinit &  rcuts &  icuts &    nodes \\
\midrule
             base &   15.9 &    1.2 &    8.0 &   4335.9 \\
      $| J^P |$=2 &   15.1 &    2.1 &    9.9 &  10518.1 \\
      $| J^P |$=3 &   16.1 &    4.0 &   17.9 &  11876.2 \\
      $| J^P |$=4 &   15.2 &    3.8 &   11.4 &   9875.4 \\
      $| T |$=120 &   24.8 &   10.4 &   21.7 &   5239.8 \\
      $| T |$=140 &   22.4 &   15.6 &   23.7 &  23023.9 \\
 $\max \{rft\}$=1 &    7.2 &    3.3 &   59.3 & 142925.7 \\
\bottomrule
\end{tabular}

        \caption{Experiment 2: mean performance of relaxations per scenario (in \% difference). 'rinit' compares the first continuous relaxation and the best solution found. 'rcuts' compares the continuous relaxation after cuts in the root node and the best solution found. 'icuts' compares the best solution found after cuts in the root node and the best solution found. Finally, 'nodes' measures the nodes in the branch and bound it took to prove optimality in the instances where it was proved.}
        \label{tab:experiment2-relax}
    \end{table}
    \end{small}

\subsection{Heuristic comparison}

    A heuristic was built to generate feasible solutions using a simulated annealing logic where each move consist of a (1) release and (2) repair action. The stop criteria are three: (a) a time limit, (b) getting a feasible solution or (c) an iteration limit. Release actions consist in un-assigning mission and checks for some aircraft-period combinations present in the solution. Repair actions consist on assigning new mission and checks to the solution in order to comply with requirements. Candidate moves are generated using the location of errors in the solution and the move is chosen randomly from those candidates. The move is then accepted or not depending on the temperature in the system and the improvement with respect to the previous solution.

    Experiment 3 studied the impact of using this heuristic to generate fast feasible solutions for instances. These solutions were, firstly, compared to the best available solutions obtained using the mathematical model (usually optimal) and, later, used as input in order to warm-start the solution process by the solver.



    Table \ref{tab:heuristic_comp} shows three ways to properly measure the heuristic's performance: (1) the average time it takes to find an initial solution ($t^{avg}_H$), (2) the distance from that first initial solution to the best known one ($\%Dif_H$) and (3) the probability of finding an initial solution in a short time (10 minutes) ($\%Init_H$). The relative quality seems to depend particularly on the type of objective function being used ($\max \{rft\}$=1 scenario) but not so on the size of the problem. On the other hand, the probability of finding a solution appears to depend on the number of parallel tasks at any given time. Finally, the average times to find a solution do increase with problem size but not in a uncontrollable way, especially for increases in planning horizon size.

    \begin{small}
    \begin{table}
        \centering
        \begin{tabular}{lrrrrrrr}
\toprule
             case &  $t^{avg}_M$ &  $t^{avg}_H$ &  $t^{avg}_{M+H}$ &  $g^{avg}_M$ &  $g^{avg}_{M+H}$ &  $\%Dif_H$ &  $\%Init_H$ \\
\midrule
             base &        247.1 &         23.7 &            183.0 &          0.2 &              0.1 &       22.0 &        95.9 \\
      $U^{min}$=5 &       2417.5 &         92.8 &           2364.2 &          3.8 &              3.4 &       19.0 &        74.5 \\
      $| J^P |$=2 &        706.9 &         86.2 &            976.9 &          0.4 &              0.7 &       22.0 &        64.4 \\
      $| J^P |$=3 &       1979.5 &        217.1 &           1712.3 &          1.6 &              1.0 &       23.0 &        34.1 \\
      $| J^P |$=4 &       3102.1 &        401.1 &           2963.7 &          1.1 &              1.1 &       19.5 &        18.4 \\
      $| T |$=120 &        247.6 &         47.8 &            305.0 &          0.5 &              0.8 &       19.7 &        75.0 \\
      $| T |$=140 &       1493.3 &         65.8 &           1211.7 &          2.9 &              1.9 &       23.4 &        87.0 \\
 $\max \{rft\}$=1 &       2214.1 &         23.7 &           2194.2 &          1.6 &              1.7 &       89.9 &        95.9 \\
\bottomrule
\end{tabular}

        \caption{Comparison of all instances where a feasible solution was found by the heuristic in a selected set of difficult scenarios. $\%Init_H$ shows the percentage of instances where the heuristic found a feasible solution before 10 minutes compared to the total number of solutions found by the MIP model after one hour. $\%Dif_H$ measures the relative distance between the solution found by the heuristic and the best solution found with the MIP model. Three average times are shown for the MIP model's instances ($t^{avg}_M$), the heuristic's time ($t^{avg}_H$) and the MIP model that was fed a feasible solution as starting solution ($t^{avg}_{H+M}$). Optimality gaps $t^{avg}_{M}$, $t^{avg}_{H+M}$ for both models are also shown.}
        \label{tab:heuristic_comp}
    \end{table}
    \end{small}

    Secondly, feeding an initially generated solution to the solver slightly increases the solving process, both in resolution times and in gap, although not in a meaningful quantity. Taking into account the heuristic performance and impact on resolution, it can be concluded that it is particularly useful for longer planning horizons, where the performance remains high and the impact is also greatest.

    Finally, since the solver permitted it, giving a nearly-feasible solution to the solver was tested. Usually this solution is then repaired by the solver during the cutting phase. No gains in solution times and gap were observed for these cases.
\section{Conclusions and further work}
\label{sec:conclusions}

This paper presented a new MIP formulation for the long-term Flight and Maintenance Planning problem for military aircraft. Its performance was measured by solving an array of scenarios inspired by real French Air Force needs.

Compared to the existing literature, the problem studied includes several new constraints while still managing to solve fairly large instances. Also, a complexity proof was presented.

The study showed that the mathematical model's performance is quite robust with respect to increases in fleet size, number of missions and the size of the planning horizon. On the other hand, adding fixed additional consumptions outside of missions proved challenging.

In terms of performance, gains in resolution time were obtained by developing a construction heuristic that provided starting solutions for the cases where an integer solution is not easily obtained by the model. It was shown to be potentially useful in scenarios with long planning horizons.

With respect to extending the model, additional constraints from real world application, such as long-term storage of grounded aircraft, can be incorporated.

In order to better integrate long term schedules with the existing medium- and short-term maintenance planning, a matheuristic that alternates between the two problems could potentially satisfy the needs of the different scopes with a good quality solution that takes several types of aircraft maintenance into account simultaneously.

Regarding uncertainty treatment, explicit ways to measure the stochastic nature of the input parameters can be implemented. For example, by using robust optimization or stochastic programming in order to guarantee the feasibility of the solution even in extreme scenarios.













\end{document}